\title{\LARGE \bf
Think Deep and Fast: Learning Neural Nonlinear Opinion Dynamics from Inverse Dynamic Games for Split-Second Interactions
}
\author{Haimin Hu$^{1,2,*}$, Jaime Fernández Fisac$^{1}$, Naomi Ehrich Leonard$^{3}$, \\ Deepak Gopinath$^{2}$, Jonathan DeCastro$^{2}$, and Guy Rosman$^{2}$%
\thanks{$^{1}$Department of Electrical and Computer Engineering, Princeton University, NJ, USA,
        {\tt\small \{haiminh,jfisac\}@princeton.edu}}%
\thanks{$^{2}$Toyota Research Institute, MA, USA,
        {\tt\small \{jonathan.decastro,
        deepak.gopinath,guy.rosman\}@tri.global}}%
\thanks{$^{3}$Department of Mechanical and Aerospace Engineering, Princeton University, NJ, USA,
        {\tt\small naomi@princeton.edu}}
\thanks{*Work conducted while HH was an intern at Toyota Research Institute.}%
\thanks{This work is supported by Toyota Research Institute (TRI). It solely reflects the opinions and conclusions of its authors and not TRI, or any other Toyota entity.
The authors thank Jingqi Li and Xinjie Liu for very helpful discussions on inverse dynamic games, and Alessio Franci for generously providing insights into nonlinear opinion dynamics.}
}
\newtheorem{theorem}{Theorem}
\newtheorem{lemma}{Lemma}    
\newtheorem{remark}{Remark}
\newtheorem{example}{Example}
\newcommand{\jaime}[1]{\ifthenelse{\boolean{include-notes}}{\textcolor{orange}{\textbf{Jaime:} #1}}{}}
\newcommand{\haimin}[1]{\ifthenelse{\boolean{include-notes}}{\textcolor{magenta}{\textbf{Haimin:} #1}}{}}
\newcommand{\todo}[1]{\ifthenelse{\boolean{include-notes}}{\textcolor{teal}{\textbf{TODO:} #1}}{}}
\newcommand{\remove}[1]{\ifthenelse{\boolean{include-remove}}{\textcolor{red}{\sout{#1}}}{}}
\newcommand{\new}[1]{\ifthenelse{\boolean{include-new}}{\textcolor{purple}{#1}}{#1}}
\newcommand{\p}[1]{\smallskip \noindent \textbf{{#1}.}}
\newglossaryentry{NOD}
{
  name={NOD},
  plural={NOD},
  description={nonlinear opinion dynamics},
  first={nonlinear opinion dynamics (\glsentrytext{NOD})},
  descriptionplural={nonlinear opinion dynamics},
  firstplural={nonlinear opinion dynamics (\glsentryplural{NOD})}
}
\newglossaryentry{DNN}
{
  name={DNN},
  description={deep neural network},
  first={deep neural network (\glsentrytext{DNN})},
}
\newglossaryentry{MLE}
{
  name={MLE},
  description={maximum likelihood estimation},
  first={maximum likelihood estimation (\glsentrytext{MLE})},
}
\newglossaryentry{ODG}
{
  name={ODG},
  description={opinion-guided dynamic game},
  first={opinion-guided dynamic game (\glsentrytext{ODG})},
}
\newglossaryentry{MLP}
{
  name={MLP},
  description={multilayer perceptron},
  first={multilayer perceptron (\glsentrytext{MLP})},
}
\newglossaryentry{RL}
{
  name={RL},
  description={reinforcement learning},
  first={reinforcement learning (\glsentrytext{RL})},
}
\newglossaryentry{E2E-BC}
{
  name={E2E-BC},
  description={end-to-end behavior cloning},
  first={End-to-end behavior cloning (\glsentrytext{E2E-BC})},
}
\newcommand{\real}{\operatorname{Re}}
\newcommand{\reals}{\mathbb{R}}
\newcommand{\distr}{p}
\newcommand{\grad}{{\nabla}}
\newcommand{\jacobian}{{\mathbf{J}}}
\DeclareMathOperator{\column}{col}
\newcommand{\diag}{\operatorname{diag}}
\newcommand{\blkdiag}{\operatorname{blkdiag}}
\newcommand{\opinionDyn}{{g}}
\newcommand{\opnstate}{{z}}
\newcommand{\attstate}{{\lambda}}
\newcommand{\bias}{{b}}
\newcommand{\damping}{{d}}
\newcommand{\saturation}{{S}}
\newcommand{\opnidx}{{\ell}}
\newcommand{\opnidxaux}{{p}}
\newcommand{\nodparam}{\eta}
\newcommand{\iagent}{{i}}
\newcommand{\nagents}{{N_a}}
\newcommand{\noptioni}{{N_{o^\iagent}}}
\newcommand{\iagentaux}{{j}}
\newcommand{\state}{{x}}
\newcommand{\ctrl}{{u}}
\newcommand{\obs}{{y}}
\newcommand{\traj}{{\mathbf{x}}}%
\newcommand{\csig}{{\mathbf{u}}}
\newcommand{\obstraj}{\mathbf{\obs}}
\newcommand{\dyn}{{f}}
\newcommand{\policy}{{\pi}}
\newcommand{\orderset}{\mathcal{I}}
\newcommand{\ordersetagent}{\orderset_{a}}
\newcommand{\ordersetoptioni}{\orderset_{o^i}}
\newcommand{\ordersetthetai}{\orderset_{\theta_i}}
\newcommand{\stagecost}{c}
\newcommand{\eqset}{\Gamma}
\newcommand{\nnparam}{\phi}
\newcommand{\loss}{L}
\newcommand{\nnnod}{h_\nodparam}
\newcommand{\nnnodinit}{h_{\opnstate_0}}
\begin{document}

\maketitle
\thispagestyle{empty}
\pagestyle{empty}

\begin{abstract}
Non-cooperative interactions commonly occur in multi-agent scenarios such as car racing, where an ego vehicle can choose to overtake the rival, or stay behind it until a safe overtaking ``corridor'' opens.
While an expert human can do well at making such time-sensitive decisions, 
autonomous agents are incapable of rapidly reasoning about complex, potentially conflicting options, leading to suboptimal behaviors such as deadlocks.
Recently, the \gls{NOD} model has proven to exhibit fast opinion formation and avoidance of decision deadlocks.
However, \gls{NOD} modeling parameters are oftentimes assumed fixed, limiting their applicability in complex and dynamic environments.  It remains an open challenge to determine such parameters \textit{automatically and adaptively}, accounting for the ever-changing environment.
In this work, we propose for the first time a learning-based and game-theoretic approach to synthesize a Neural \gls{NOD} model from expert demonstrations, given as a dataset containing (possibly incomplete) state and action trajectories of interacting agents.
We demonstrate Neural \gls{NOD}'s ability to make fast and deadlock-free decisions in a simulated autonomous racing example.
We find that Neural \gls{NOD} consistently outperforms the state-of-the-art data-driven inverse game baseline in terms of safety and overtaking performance.
\end{abstract}

\begin{figure}[!hbtp]
  \centering
  \includegraphics[width=1.0\columnwidth]{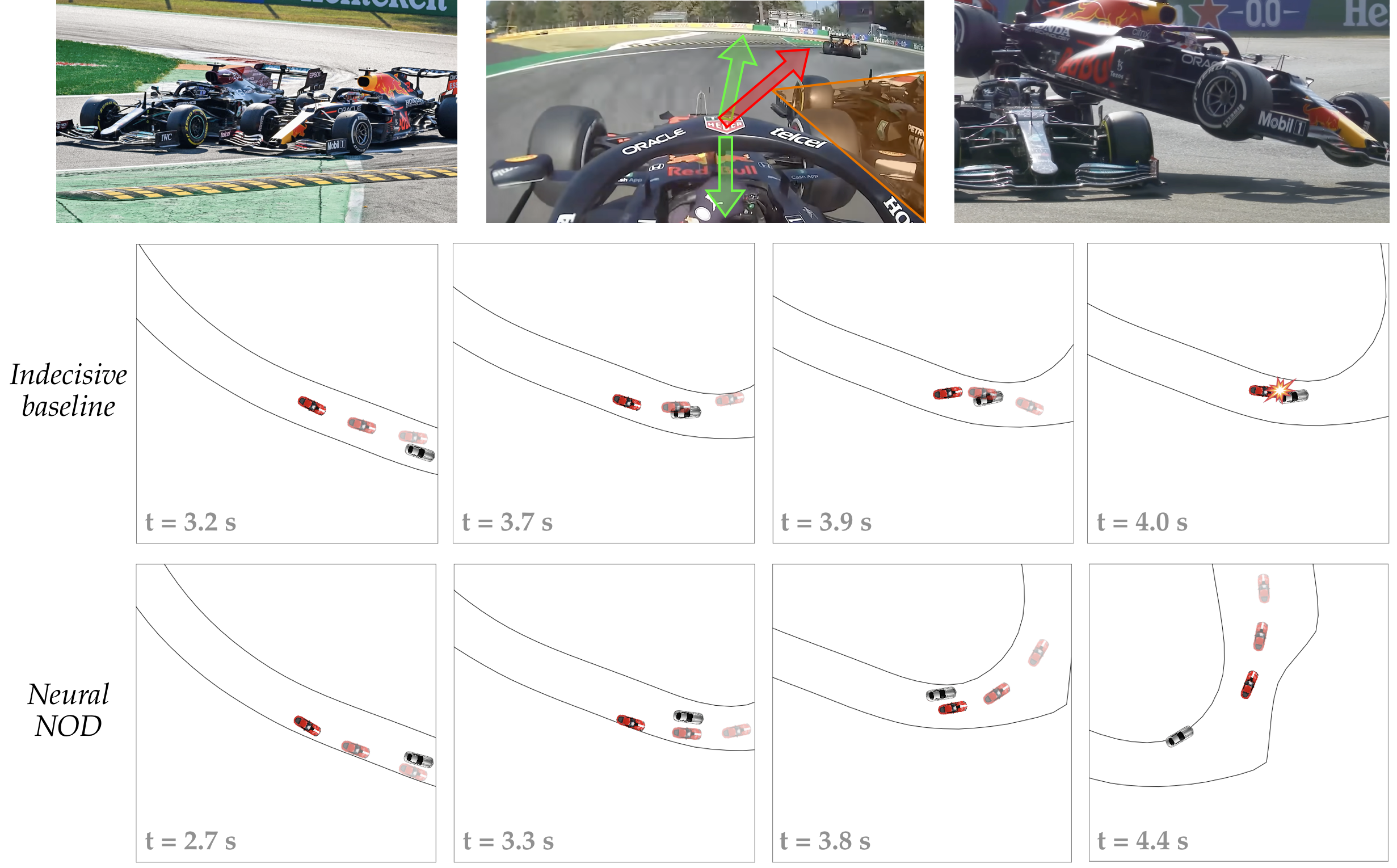}
  \caption{\label{fig:front_fig} 
  {\footnotesize
  Rapid and resolute decision-making is essential for non-cooperative multi-agent interactions like car racing.
  \emph{Top:} During the 2021 Formula~1 Italian Grand Prix, a fatal collision occurred involving championship contenders Max Verstappen and Lewis Hamilton.
  Verstappen was deemed predominantly responsible because, despite the overtaking opportunity closing after Hamilton (orange triangle) led him into the corner, he had options to avoid the collision by slowing down or taking the emergency alternative route (green arrows), but he failed to make a \textit{timely decision}, continuing along the racing line (red arrow) and ultimately leading to an inevitable collision later on.
  \emph{Middle:} A similar scenario arises in simulated autonomous racing when the ego car (red) uses an indecisive policy, hesitating between overtaking the rival (silver) from the inside or outside of the corner (as seen in its planned future motions depicted in transparent snapshots), ultimately resulting in a collision.
  \emph{Bottom:} The proposed Neural \gls{NOD} model reasons \textit{split-second} strategic interactions between the agents, rendering safe and decisive overtaking maneuvers.
  }
  }
  \vspace{-7mm}
\end{figure}

\section{Introduction}
\label{sec:intro}

As autonomous multi-agent systems evolve toward an unprecedented complexity, especially in high-stakes scenarios such as car racing, the necessity for reliable decision-making in real-time becomes paramount.
Rapid decision-making is critical in these settings, not only for performance but also for ensuring safety during complex, close-proximity interactions.
For example, in autonomous car racing (\autoref{fig:front_fig}), the ego vehicle must decide, in a split second, between overtaking a rival car without crashing into the rival or strategically trailing it to capitalize on emerging opportunities.

Game-theoretic motion planning techniques explicitly reason about the coupled interests among agents, and have been widely used for multi-agent non-cooperative interactions, e.g., autonomous driving~\cite{schwarting2019social,wang2021game}, human--robot interaction~\cite{li2019differential,bajcsy2021analyzing,hu2023activeIJRR}, and distributed control systems~\cite{maestre2011distributed,hu2020non,williams2023distributed}.
Computing equilibria for general dynamic games is oftentimes computationally intractable due to the inherent non-convexity involving, e.g., nonlinear system dynamics, non-convex cost functions, and collision-avoidance constraints.
To tackle such computation challenges, Fridovich-Keil et al.~\cite{fridovich2020efficient} apply iterative linear-quadratic (ILQ) approximations to solve general-sum dynamic games, enabling real-time computation of approximate Nash equilibria.
Recent work~\cite{lidard2024blending} extends this approach to the stochastic setting by blending a game policy (with a fixed payoff) with a data-driven reference policy.
While these game-theoretic approaches can effectively capture non-cooperative interactions, they often fall short in rapid calibration of strategies to account for agents' evolving intents.

As an alternative to model-based optimization, data-driven methods have also shown competitive performance in interactive autonomy.
GT Sophy~\cite{wurman2022outracing} demonstrates that a well-trained deep \gls{RL} policy can win a head-to-head competition against some of the world’s best drivers in Gran Turismo, a popular car racing video game.
However, the policy does not explicitly reason about the \emph{strategic interactions} among players.
To investigate this gap, Chen et al.~\cite{chen2023learn} combine \gls{RL} with self-supervised learning that models the opponent behaviors and show that the resulting policy outperforms methods with manually designed opponent strategies in multi-agent car racing.
Recent work~\cite{hu2023belgame} leverages belief-space safety analysis to actively reduce opponent uncertainty in adversarial interactions.
Despite promising performance and scalability, these black-box methods may lack \textit{strategic} reasoning when deployed for complex interactions and can struggle with practical limitations such as poor generalization, particularly when faced with insufficient or unrepresentative data~\cite{ren2021generalization}.

The recently developed \gls{NOD} model~\cite{bizyaeva2022nonlinear} provides a principled way to make deadlock-free decisions in multi-agent interactions.
Opinion states numerically represent agents' agreement or disagreement on a decision.
\gls{NOD} achieves rapid opinion formation through a nonlinear bifurcation. A local bifurcation is a change in the number and/or stability of equilibrium solutions as a parameter varies through a bifurcation point, which corresponds to a singularity in the dynamics.  Near this point, the process is \textit{ultrasensitive} to external inputs, e.g., physical interactions among agents, and the implicit threshold on the input associated with rapid opinion formation can be tuned by modulating parameters~\cite{leonard2024fast}.

Cathcart et al.~\cite{cathcart2022opinion} use \gls{NOD} to break social deadlocks in a human--robot navigation problem.
Amorim et al.~\cite{amorim2024threshold} propose an \gls{NOD}-based decision-making framework for robots choosing between different spatial tasks while adapting to environmental conditions, where opinions represent agents' preference for a particular task.
Paine et al.~\cite{Paine2024} leverage \gls{NOD} to make group decisions for autonomous multi-robot systems, demonstrating robust and interpretable collective behaviors in a field test involving multiple unmanned surface vessels.
Hu et al.~\cite{hu2023emergent} design for the first time a general algorithm to automatically synthesize an \gls{NOD} model.
They propose to construct \gls{NOD} parameters based on dynamic game value functions, thus making the opinion evolution dependent on agents' physical states.
However, this approach assumes that agents face a set of mutually exclusive options, and it cannot effectively integrate data-driven prior knowledge to facilitate online decision-making.
For a comprehensive review of \gls{NOD}, we refer the readers to Leonard et al.~\cite{leonard2024fast}.
For general multi-agent non-cooperative interaction, it remains an open challenge to select suitable \gls{NOD} model parameters such that the ego agent can reason how ongoing interactions may drive rapid changes in its intended course of action as well as those of other agents.

\p{Contributions}
We propose for the first time a learning-based and game-theoretic approach to synthesize a Neural \gls{NOD} model for time-sensitive multi-agent decision-making.
The model can be trained over a dataset of diverse interaction trajectories, can account for general opinion spaces, and can be plug-and-play used by any model-based game solver for automatic cost tuning.
Compared to previous \gls{NOD} with static model parameters, the Neural \gls{NOD} dynamically adjusts its parameters according to the evolving \textit{\emph{physical}} states, allowing the ego robot to rapidly adapt its decision to the fast-changing environment.
We provide verifiable conditions under which the model is guaranteed to avoid indecision.
We deploy the Neural \gls{NOD} learned from real human racing data for simulated car racing and compare our approach with the state-of-the-art data-driven game-theoretic planning baseline.

\section{Preliminaries}
\label{sec:prelim}

\p{Notation}
We use superscript $\iagent$ to indicate agent $\iagent \in \ordersetagent := \{1, 2, \ldots, \nagents\}$.
We define $[N] := \{1,2,\ldots,N\}$.
Given a function $f$, we denote $\jacobian(f(\cdot))$ as its Jacobian, and $\grad_\theta f$ as the derivative of $f$ with respect to variable $\theta$.
Given a matrix $A \in \reals^{n \times n}$,
let $\real^+(\sigma(A))$ be the set of positive real part of eigenvalues of $A$.
Define $\column(\cdot)$ as the operator that stacks all the arguments into a column vector.

\p{General-Sum Dynamic Games}
We consider an $\nagents$-player finite-horizon, discrete-time dynamic game governed by a nonlinear dynamical system:
\begin{equation}
\label{eq:dyn_sys}
\state_{t+1} = \dyn(\state_t, \ctrl_t),
\end{equation}
where $\state \in \reals^{n_x}$ is the \emph{joint} state of the system, which we assume to be fully observable at runtime, $\ctrl_t:=(\ctrl^{1}_t, \ldots, \ctrl^{\nagents}_t)$ where $\ctrl^\iagent \in \reals^{m_\iagent}$ is the control of player $\iagent$.
The objective of each player $\iagent$ is to minimize a cost functional:
\begin{equation}
\label{eq:game_cost}
    J^\iagent\left(\policy; \theta^\iagent \right) := \textstyle\sum_{k=0}^T \stagecost_k^\iagent(\state_k, \ctrl_k; \theta^\iagent),
\end{equation}
where $\stagecost^\iagent_k(\cdot)$ is the cost of stage $k$, $\theta^i \in \reals^{n_{\theta^i}}$ is the (possibly unknown) parameter of the stage cost, and $\policy := (\policy^1(\cdot),\ldots,\policy^\nagents(\cdot))$ is a tuple containing all players' control policies, which determines the \emph{information structure}~\cite[Ch.~5]{bacsar1998dynamic} of the game, e.g., open-loop if $\ctrl^\iagent_t = \policy^\iagent(\state_0) \in \reals^{m_\iagent}$ and feedback if $\ctrl^\iagent_t = \policy^\iagent(\state_t) \in \reals^{m_\iagent}$.
Policy tuple $\policy$ can constitute different \emph{equilibrium} types of the game, among which the most common ones are Nash~\cite{nash1951non} and Stackelberg~\cite{stackelberg1934}.
In this paper, we do not adhere to a specific information structure or equilibrium type.

\p{Inverse Dynamic Games}
When cost parameters $\theta := (\theta^1, \ldots, \theta^\nagents)$ are initially unknown, we can solve an \textit{inverse} dynamic game~\cite{peters2023online,li2023cost,liu2023learning} to identify these parameters from trajectory data. This is formulated as a \gls{MLE} problem:
\begin{equation}
\begin{aligned}
\label{eq:inverse_game_mle}
    \max_{\theta, \traj, \csig}~\distr(\obstraj \mid \traj, \csig), \quad \text{s.t.} (\traj, \csig) \in \eqset(\theta)
\end{aligned}
\end{equation}
where $\distr(\obstraj \mid \traj, \csig)$ is the \emph{likelihood} of observed trajectory data $\obstraj := \obs_{[0:T]}$ given state trajectories $\traj := \state_{[0:T]}$ and control sequences $\csig := \ctrl_{[0:T-1]}$ that correspond to a (user-specified) equilibrium, and $\eqset(\theta)$ is the set of all such equilibrium solutions parameterized by $\theta$.
\gls{MLE}~\eqref{eq:inverse_game_mle} can handle \textit{corrupted} data: the initial state $\state_0$ is not assumed to be known, and the observation data $\obstraj$ may be noisy and missing certain entries (e.g., at specific time indices).
An inverse game routine such as~\cite{peters2023online,li2023cost,liu2023learning} solves \gls{MLE}~\eqref{eq:inverse_game_mle} by applying gradient ascent to update the value of $\theta$, where gradient $\grad_\theta \distr(\cdot)$ is computed by \textit{differentiating through} a forward game solver that produces an equilibrium solution $(\traj, \csig) \in \eqset(\theta)$.

\p{Nonlinear Opinion Dynamics}
The \gls{NOD} model~\cite{bizyaeva2022nonlinear}
enables fast and flexible multi-agent decision-making.
Consider multi-agent system~\eqref{eq:dyn_sys}, in which each agent $\iagent$ is faced with an arbitrarily large (but finite) number of $\noptioni$ options.
For every $\iagent \in \ordersetagent$ and $\ell \in \ordersetoptioni := \{1,\ldots,\noptioni\}$, define $\opnstate^{\iagent}_\ell \in \reals$ to be the \textit{opinion state} of agent $\iagent$ about option $\ell$.
The more positive (negative) is $\opnstate^{\iagent}_\ell$, the more agent $\iagent$ \textit{favors (disfavors)} option $\ell$.
We say agent $\iagent$ is \textit{neutral} (i.e., undecided) about option $\ell$ if $\opnstate^{\iagent}_\ell=0$.
Compactly, we define $\opnstate^i := (\opnstate^\iagent_{1}, \ldots, \opnstate^\iagent_{\noptioni})$, and $\opnstate := (\opnstate^1, \ldots, \opnstate^{\nagents})$ as the opinion state of agent $i$ and the joint system, respectively.
The evolution of opinion state in continuous time is governed by the \gls{NOD} model adapted from~\cite{bizyaeva2022nonlinear}: 
\begin{equation}
\begin{aligned}
    \label{eq:opn_dyn_orig}
    \dot \opnstate^\iagent &= \opinionDyn^i_c(\opnstate^i) =  - D^{\iagent} \opnstate^{\iagent} + \bias^\iagent + \attstate \saturation^\iagent (\opnstate^{\iagent}),
\end{aligned}
\end{equation}
where the $\ell$-th entry of the saturation term $\saturation^\iagent(\opnstate^\iagent)$ is
\begin{equation*}
\begin{aligned}
    \saturation^{\iagent}_\ell(\cdot) =  &\saturation_1\Big(
        \alpha^{\iagent}_\ell \opnstate^{\iagent}_{\opnidx} +
        \textstyle\sum_{\iagentaux \in \ordersetagent \setminus \{\iagent\} } \gamma^{\iagent \iagentaux}_\ell \opnstate^{\iagentaux}_{\opnidx} \Big) \\
    &+ \sum_{\opnidxaux \in \ordersetthetai \setminus \{\opnidx\} } \saturation_2
    \Big(  
        \beta_{\ell\opnidxaux}^{\iagent} \opnstate^{\iagent}_{\opnidxaux} +
        \textstyle\sum_{\iagentaux \in \ordersetagent \setminus \{\iagent\} } \delta^{\iagent \iagentaux}_{\ell\opnidxaux}  \opnstate^{\iagentaux}_{\opnidxaux}
    \Big),
\end{aligned}
\end{equation*}
which satisfies for $r \in \{1,2\}$, $\saturation_r(0)=0$, $\saturation^{\prime}_r(0)=1$, $\saturation^{\prime\prime}_r(0)\neq 0$, and $\saturation^{\prime\prime\prime}_r(0)\neq 0$.
Valid choices for $\saturation_r(\cdot)$ include the sigmoid function and the hyperbolic tangent function $\tanh$.
In~\eqref{eq:opn_dyn_orig}, $D^{\iagent} = \diag(\damping^\iagent_1,\ldots,\damping^\iagent_\noptioni)$ is the \textit{damping} matrix with each $\damping^\iagent_\ell > 0$, $\bias^\iagent$ represents agent $\iagent$'s own bias, $\attstate > 0$ is the \textit{attention} value on nonlinear opinion exchange, which here taken to be shared across all agents, $\alpha^{\iagent}_\ell \geq 0$ is the self-reinforcement gain, $\beta^{\iagent}_{\ell\opnidxaux} \geq 0$ is the same-agent inter-option coupling gain, $\gamma^{\iagent\iagentaux}_{\ell}$ is the gain of the same-option inter-agent coupling with another agent $\iagentaux$, and $\delta^{\iagent \iagentaux}_{\ell\opnidxaux}$ is the gain of the inter-option inter-agent coupling with another agent $\iagentaux$.
In order to guide a dynamic game with \gls{NOD}, we consider the \textit{discrete-time} version of the \gls{NOD} model jointly for all $\iagent \in \ordersetagent$:
\begin{equation}
\begin{aligned}
    \label{eq:opn_dyn_dt}
    \opnstate_{t+1} &= \opinionDyn(\opnstate_{t}),
\end{aligned}
\end{equation}
which may be obtained by applying time discretization (e.g., forward Euler or Runge-Kutta method) to the continuous-time \gls{NOD} model~\eqref{eq:opn_dyn_orig}.
While \gls{NOD} has demonstrated efficacy in multi-agent decision-making, \emph{automatic} synthesis of its parameters that can change \textit{adaptively} based on physical states, e.g., $\alpha^\iagent_t(\state_t)$, largely remains an open problem.
In the next section, we propose a novel inverse-game-based approach that learns a general \gls{DNN}-parameterized \gls{NOD} model, which allows the opinion state $\opnstate$ to be influenced by physical states $\state$.

\section{Neural NOD for Interactive Robotics}
\label{sec:learn_nod}

In this section, we present our main contribution: Learning Neural \gls{NOD} from inverse dynamic games for fast decision-making in multi-agent non-cooperative interactions.

\subsection{Automatic Tuning of Game Costs using NOD}
Our key insight into opinion states is that they not only model agent $i$'s \textit{attitude} towards option $\ell$ (determined by the sign of $\opnstate^i_\ell$) but also indicate how \textit{strongly} the agent prefers such an option (determined by the magnitude of $\opnstate^i_\ell$).
Therefore, we propose to use an opinion state $\opnstate^i_\ell$ in a dynamic game as the coefficient of the cost term that corresponds to option $\ell$. 
Consequently, we may use \gls{NOD} to evolve the opinion state, \emph{automatically} tuning cost parameter $\theta^i$ in~\eqref{eq:game_cost} such that each agent can rapidly adapt to changes in the environment, a crucial property for interactive robotics.
To this end, we define a class of \emph{opinionated} stage costs in~\eqref{eq:game_cost}, encoding agents' preferences through their opinion states:
\begin{equation}
\label{eq:opn_stage_cost}
   \stagecost_t^\iagent(\state_t, \ctrl_t; \opnstate^\iagent_t) := \bar{\stagecost}_t^\iagent(\state_t, \ctrl_t) + \textstyle\sum_{\ell \in \ordersetoptioni} \opnstate^\iagent_{\ell, t} \stagecost_{\ell, t} ^\iagent(\state_t, \ctrl_t),
\end{equation}
where $\opnstate^\iagent_{\ell, t} := \theta^i_{\ell, t} \in \reals$ is the \emph{cost weight} set to the opinion state of agent $i$ about opinion $\ell$, which is evolved by a state-dependent \gls{NOD} model, $\stagecost_{\ell, t} ^\iagent(\state_t, \ctrl_t) \geq 0$ is the associated \emph{basis} cost term, and $\bar{\stagecost}_t^\iagent(\state_t, \ctrl_t)$ is the \emph{residual} cost term that encodes the remaining task specifications and is independent of the options faced by the agent.
We refer to a dynamic game equipped with the above cost as an \emph{\gls{ODG}}.

\begin{example}
We illustrate our technical approach with a simulated racing example on the 1:1 reconstructed Thunderhill Raceway located in Willows, CA, USA (see Fig.~\ref{fig:full_race}).
In the race, both ego and rival vehicles are constrained to remain within the track boundaries (i.e., at least one wheel is inside the track limit), while only the ego vehicle has the responsibility to avoid a collision.
We model the vehicle motion using the 4D kinematic bicycle model~\cite{zhang2020optimization}.
Similar to~\cite{song2021autonomous}, the ego's basis cost terms (and associated weights) include incentivizing overtaking ($\theta^e_{\rm{ov}}$), following ($\theta^e_{\rm{fl}}$), deviating to the inside ($\theta^e_{\rm{in}}$) or outside ($\theta^e_{\rm{ot}}$); and the rival ones include incentivizing blocking the ego ($\theta^r_{\rm{bl}}$), and deviating to the inside ($\theta^r_{\rm{in}}$) or outside ($\theta^r_{\rm{ot}}$).
The residual cost terms of each player capture optimizing lap time, enforcing safety, and regulating control efforts.
\end{example}

\subsection{Neural Synthesis of NOD from Inverse Dynamic Games}
The key question of using \gls{NOD} in an \gls{ODG}~\eqref{eq:opn_stage_cost} for physical interaction is: \emph{How should \gls{NOD} parameters change adaptively in response to the evolution of physical state $\state$?}
Our central contribution is an inverse game approach to synthesize a \textit{Neural \gls{NOD}} model for \emph{general} \gls{ODG}s with no additional assumptions on each agent's specific task and set of options.
Building on discrete-time \gls{NOD} model~\eqref{eq:opn_dyn_dt},
we may define a \textit{Neural} \gls{NOD} model as: 
\begin{subequations}
\label{eq:neural_nod}
\begin{align}
    \opnstate_{t+1} &= \opinionDyn(\opnstate_{t}; \nodparam_t), && \forall t > 0 \\
    \nodparam_t &= \nnnod(\state_t; \nnparam),  && \forall t > 0 \\
    \opnstate_0 &= \nnnodinit(\state_0; \nnparam_0), \label{eq:neural_nod:z0}
\end{align}
\end{subequations}
where $\nodparam_t = (\nodparam^1_t, \ldots, \nodparam^\nagents_t, \attstate_t)$, with $\nodparam^\iagent_t = (\damping^{\iagent}_t$, $\bias^\iagent_t$, $\alpha^{\iagent}_t$, $\gamma^{\iagent}_t$, $\beta^{\iagent}_t$, $\delta^\iagent_t)$, is the vector that aggregates all \gls{NOD} parameters.
In our proposed scheme, $\nodparam_t$ is predicted by \gls{DNN} $\nnnod$ with parameters $\nnparam$ and input $\state_t$.
Given an initial state $\state_0$, the \gls{NOD} model is initialized with opinion $\opnstate_0$, which is predicted by a separate \gls{DNN} $\nnnodinit$ parameterized by $\nnparam_0$.
The predicted initial opinion state $\opnstate_0$ can be interpreted as the \textit{prior information} of agents' opinions, which we can use to initialize the \gls{NOD} model with a more \textit{informative} opinion than a neutral one (i.e., $\opnstate_0 = 0$) commonly used in the literature.

\begin{figure}[!hbtp]
  \centering
  \includegraphics[width=1.0\columnwidth]{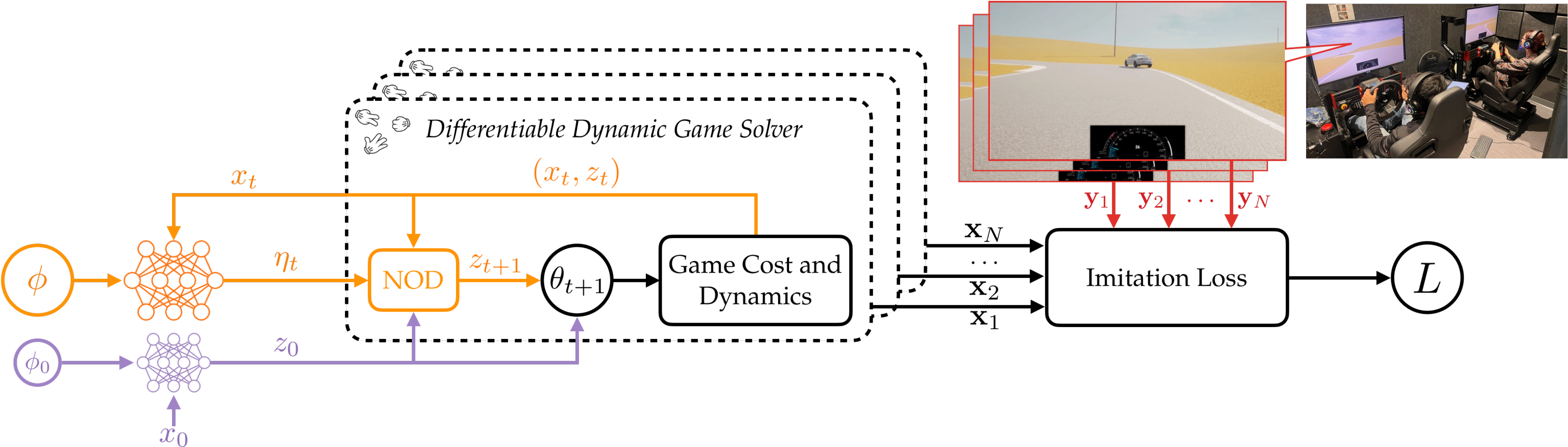}
  \caption{\label{fig:comp_graph} Computation graph of the inverse game for training a Neural \gls{NOD} model illustrated with the autonomous racing example.
  \vspace{-5mm}
  }
\end{figure}

\begin{remark}[\gls{DNN} features]
    The Neural \gls{NOD} framework is \textit{agnostic} to the specific \gls{DNN} feature space or architecture.
    For ease of exposition, we use the current state $\state_t$ as the input feature of \gls{DNN} $\nnnod(\cdot; \nnparam)$.
    Alternatively, we may use an LSTM~\cite{hochreiter1997long} or transformer~\cite{shi2022mtr} model for $\nnnod(\cdot; \nnparam)$, which would then predict \gls{NOD} parameters based on a state history.
\end{remark}

Our technical approach towards synthesizing a Neural \gls{NOD} model is inspired by recent work on inverse dynamic games,
but incorporates several key distinctions.
Existing inverse game approaches, such as~\cite{peters2023online,li2023cost,liu2023learning}, are predominantly limited to learning a \textit{static} cost parameter $\theta$ from a \textit{single} gameplay trajectory data.
As a result, the inverse game problem needs to be solved repeatedly when the environment (e.g., the initial condition or the other agent's intent) changes, presenting computational hurdles for real-time deployment.
In contrast, our proposed approach involves offline training of a Neural \gls{NOD} model from a dataset containing \textit{batches} of gameplay trajectories.
The online computation speed of \textit{agents' policies} via solving an \gls{ODG} with learned Neural \gls{NOD} is close to that of a standard dynamic game, as long as the \gls{DNN} used by Neural \gls{NOD} adopts fast inference.

We cast Neural \gls{NOD} training as an inverse dynamic game.
The corresponding \gls{MLE} problem is formulated as:
\begin{equation}
\begin{aligned}
\label{eq:nod_mle}
    \max_{\nnparam, \nnparam_0, \{(\traj_n, \csig_n)\}_{n\in[N]}}~& \loss := \textstyle\sum_{n=1}^N \distr(\obstraj_n \mid \traj_n, \csig_n)\\
    \text{s.t.}\qquad\quad~& (\traj_n, \csig_n) \in \eqset(\nnparam, \nnparam_0), \quad \forall n \in [N],
\end{aligned}
\end{equation}
where $\loss$ is the \textit{imitation objective} defined by a set of observations $\{\obstraj_n\}_{n \in [N]}$ and the corresponding set of state and control trajectories $\{(\traj_n, \csig_n)\}_{n \in [N]}$, wherein each pair is at an equilibrium of the \gls{ODG} governed by Neural \gls{NOD} with parameter $(\nnparam, \nnparam_0)$.
These parameters can be learned by solving MLE~\eqref{eq:nod_mle} using standard gradient-based methods~\cite{bottou2010large,kingma2014adam}, in which gradients $\nabla_\nnparam \loss$ and $\nabla_{\nnparam_0} \loss$ can be obtained with automatic differentiation (e.g.,~\cite{paszke2017automatic,jax2018github}) by backpropagating through the inverse game computation graph, as shown in~\autoref{fig:comp_graph}.
Note that \gls{MLE}~\eqref{eq:nod_mle} learns \gls{DNN} $\nnnod$ and $\nnnodinit$, which take as input arbitrary state $\state \in \reals^{n_x}$, while \gls{MLE}~\eqref{eq:inverse_game_mle} only works for a specific initial state $\state_0$.
Our proposed training pipeline solely requires that the \gls{ODG} is formulated with any existing \textit{differentiable} dynamic game frameworks, such as open-loop Nash~\cite{peters2020inference}, feedback Nash~\cite{li2023cost}, generalized Nash~\cite{liu2023learning}, stochastic Nash~\cite{mehr2023maximum,lidard2024blending}, open-loop Stackelberg~\cite{hu2024plays}, and feedback Stackelberg~\cite{li2024computation}.
Once a Neural \gls{NOD} model is learned offline from data, it can be deployed plug-and-play in an online \gls{ODG} solver for time-sensitive multi-agent decision-making.

\begin{example}
We use the inverse game approach~\cite{li2023cost} with a negative log-likelihood objective, which yields an approximate feedback Nash equilibrium for the \gls{ODG}.
\end{example}

\subsection{Properties of Neural NOD}
In this section, 
we show that the self-reinforcement gains $\alpha$ and attention $\attstate$ can be adjusted \textit{analytically} so that $\opnstate = 0$ is an \textit{unstable} equilibrium of Neural \gls{NOD}, that is, indecision (i.e., deadlocks) can be broken with an \textit{arbitrarily} small bias term $\bias$.
We start with a technical lemma, which ensures that a pitchfork bifurcation is possible with a Neural \gls{NOD} model.

\begin{lemma}
If there exist $\iagent\in\ordersetagent$ and $\ell\in\ordersetoptioni$ such that $\alpha^\iagent_\ell + \sigma^\iagent_\ell(\bar{\jacobian}_0) > 0$,
where Jacobian matrix $\bar{\jacobian}_0$
\vspace{1mm}
is defined as $\left.\jacobian(\column(\{\bar{\saturation}^\iagent (\opnstate^{\iagent};\nodparam^\iagent)\}_{\iagent \in \ordersetagent} ))\right|_{z=0}$,
$\bar{\saturation}^\iagent (\opnstate^{\iagent};\nodparam^\iagent) := \saturation_1\Big(
\textstyle\sum_{\iagentaux \in \ordersetagent \setminus \{\iagent\} } \gamma^{\iagent \iagentaux}_\ell \opnstate^{\iagentaux}_{\opnidx} \Big) 
+ \sum_{\opnidxaux \in \ordersetthetai \setminus \{\opnidx\} } \saturation_2
\Big(  
\beta_{\ell\opnidxaux}^{\iagent} \opnstate^{\iagent}_{\opnidxaux} +
\textstyle\sum_{\iagentaux \in \ordersetagent \setminus \{\iagent\} }$ $\delta^{\iagent \iagentaux}_{\ell\opnidxaux}  \opnstate^{\iagentaux}_{\opnidxaux}
\Big)$,
and $\sigma^\iagent_\ell(\bar{\jacobian}_0)$ is the eigenvalue of $\bar{\jacobian}_0$ at the same location in the spectral matrix of $\bar{\jacobian}_0$ as $\alpha^\iagent_\ell$ in diagonal matrix $\mathcal{A} = \diag\{\alpha^i_{\ell}\}_{i\in\ordersetagent,\ell\in\ordersetoptioni}$,
then Jacobian matrix $\jacobian_0 := \left.\jacobian(\column(\{\saturation^\iagent (\opnstate^{\iagent};\nodparam^\iagent)\}_{\iagent \in \ordersetagent} ))\right|_{z=0}$ has at least one eigenvalue with a positive real part, i.e., $\real^+(\sigma(\jacobian_0)) \neq \emptyset$.
\end{lemma}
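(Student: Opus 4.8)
The plan is to reduce the statement to a short linear-algebra argument by computing both Jacobians at the origin and relating them, after which the spectral conclusion follows from a trace identity. The key structural fact I would establish first is that $\jacobian_0$ and $\bar{\jacobian}_0$ differ \emph{only} by the self-reinforcement gains, which enter on the diagonal. Concretely, I would differentiate each component $\saturation^\iagent_\ell$ with respect to each opinion coordinate and evaluate at $\opnstate=0$, using the normalization $\saturation_r(0)=0$ and $\saturation'_r(0)=1$. At the origin every argument of $\saturation_1,\saturation_2$ vanishes, so each partial reduces to the matching linear gain: the $(\iagent,\ell)$ row of $\jacobian_0$ carries $\alpha^\iagent_\ell$ in its own diagonal slot, $\gamma^{\iagent\iagentaux}_\ell$ in the $(\iagentaux,\ell)$ slots, $\beta^\iagent_{\ell\opnidxaux}$ in the $(\iagent,\opnidxaux)$ slots, and $\delta^{\iagent\iagentaux}_{\ell\opnidxaux}$ in the $(\iagentaux,\opnidxaux)$ slots.

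Since $\bar{\saturation}^\iagent_\ell$ is obtained from $\saturation^\iagent_\ell$ by deleting precisely the self-term $\alpha^\iagent_\ell \opnstate^\iagent_\ell$ inside $\saturation_1$, and because $\opnstate^\iagent_\ell$ appears nowhere else in $\bar{\saturation}^\iagent_\ell$, the only entries that change are the diagonal ones. This gives the clean identity $\jacobian_0 = \bar{\jacobian}_0 + \mathcal{A}$ with $\mathcal{A} = \diag\{\alpha^\iagent_\ell\}$. A second, crucial consequence of the same computation is that the diagonal of $\bar{\jacobian}_0$ is identically zero, so $\trace(\bar{\jacobian}_0)=0$ and therefore $\trace(\jacobian_0) = \sum_{\iagent,\ell}\alpha^\iagent_\ell \ge 0$, since every $\alpha^\iagent_\ell \ge 0$.

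With these facts I would conclude by a trace argument and a short case split. If some self-reinforcement gain is strictly positive, then $\trace(\jacobian_0) = \sum_{\iagent,\ell}\alpha^\iagent_\ell > 0$; as $\jacobian_0$ is real its trace equals $\sum_k \real\big(\lambda_k(\jacobian_0)\big)$, so at least one eigenvalue has strictly positive real part, i.e. $\real^+(\sigma(\jacobian_0)) \neq \emptyset$. In the complementary case all $\alpha^\iagent_\ell = 0$, whence $\mathcal{A}=0$ and $\jacobian_0 = \bar{\jacobian}_0$; the hypothesis then forces $\sigma^\iagent_\ell(\bar{\jacobian}_0) > 0$ for some $(\iagent,\ell)$, and since this quantity is by definition an eigenvalue of $\bar{\jacobian}_0 = \jacobian_0$, the conclusion again holds. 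Reading the ``same location in the spectral matrix'' correspondence as the pairing of each diagonal entry $\alpha^\iagent_\ell$ of $\mathcal{A}$ with the matching entry $\sigma^\iagent_\ell(\bar{\jacobian}_0)$ of the eigenvalue matrix of $\bar{\jacobian}_0$ ties the two cases together, because summing the paired quantities reproduces exactly $\trace(\jacobian_0)$.

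I expect the main obstacle to be conceptual rather than computational: one must resist the tempting but generally invalid shortcut of asserting that each eigenvalue of $\jacobian_0 = \mathcal{A} + \bar{\jacobian}_0$ equals the sum $\alpha^\iagent_\ell + \sigma^\iagent_\ell(\bar{\jacobian}_0)$, which holds only when $\mathcal{A}$ and $\bar{\jacobian}_0$ are simultaneously triangularizable, and this fails in general precisely because $\bar{\jacobian}_0$ has zero diagonal but nonzero off-diagonal coupling. The trace decomposition sidesteps this issue and, combined with the direct treatment of the degenerate $\mathcal{A}=0$ case, yields a rigorous proof. A minor point I would state carefully is the reality of $\sigma^\iagent_\ell(\bar{\jacobian}_0)$: since $\bar{\jacobian}_0$ is real but need not be symmetric, I would either restrict to a real eigenvalue or interpret the hypothesis as a condition on the real part, under which the argument is unchanged.
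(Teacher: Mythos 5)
Your proposal is correct, and there is in fact nothing in the paper to compare it against: the lemma is stated without proof (only \autoref{thm:indecision_breaking} carries a proof, which simply invokes the lemma's conclusion $\real^+(\sigma(\jacobian_0)) \neq \emptyset$). Your computation of both Jacobians at the origin is right: since $\saturation_r(0)=0$ and $\saturation_r'(0)=1$, every entry reduces to the corresponding linear gain, giving $\jacobian_0 = \mathcal{A} + \bar{\jacobian}_0$ with $\mathcal{A} = \diag\{\alpha^\iagent_\opnidx\}$ and $\bar{\jacobian}_0$ having identically zero diagonal (the variable $\opnstate^\iagent_\opnidx$ appears nowhere in $\bar{\saturation}^\iagent_\opnidx$). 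The phrasing of the lemma---pairing each $\alpha^\iagent_\opnidx$ with the eigenvalue of $\bar{\jacobian}_0$ ``at the same location in the spectral matrix''---suggests the authors had in mind the spectral-shift argument $\spec(\jacobian_0) = \{\alpha^\iagent_\opnidx + \sigma^\iagent_\opnidx(\bar{\jacobian}_0)\}$, which, as you correctly point out, is valid only when $\mathcal{A}$ and $\bar{\jacobian}_0$ are simultaneously triangularizable (e.g., the homogeneous case $\mathcal{A} = \alpha I$ standard in the NOD literature, where the shift is immediate). Your trace argument sidesteps this entirely: $\trace(\jacobian_0) = \trace(\mathcal{A}) + \trace(\bar{\jacobian}_0) = \sum_{\iagent,\opnidx}\alpha^\iagent_\opnidx \geq 0$, so if any gain is strictly positive, the fact that the trace of a real matrix equals the sum of the real parts of its eigenvalues forces some eigenvalue into the open right half-plane; and if all gains vanish, $\jacobian_0 = \bar{\jacobian}_0$ and the hypothesis directly supplies the positive eigenvalue. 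This is more elementary and strictly more general than the presumed intended argument, and it exposes a structural fact the lemma's statement obscures: the pairing hypothesis is only doing work in the degenerate case $\mathcal{A} = 0$; whenever some $\alpha^\iagent_\opnidx > 0$, the conclusion holds regardless of $\bar{\jacobian}_0$. Your closing caveat about the reality of $\sigma^\iagent_\opnidx(\bar{\jacobian}_0)$ is also well taken, since $\bar{\jacobian}_0$ is real but not symmetric in general, so the hypothesis ``$\alpha^\iagent_\opnidx + \sigma^\iagent_\opnidx(\bar{\jacobian}_0) > 0$'' should be read as a condition on the real part; under that reading your proof is complete.
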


\begin{theorem}[Guaranteed Indecision Breaking]
\label{thm:indecision_breaking}
    Consider the continuous-time Neural \gls{NOD} model $\dot{\opnstate} = \opinionDyn_c(\opnstate; \nodparam)$, where $\opinionDyn_c = \column(\opinionDyn_c^1,\ldots,\opinionDyn_c^\nagents)$ with $\opinionDyn_c^i(\cdot; \nodparam^i)$ defined in~\eqref{eq:opn_dyn_orig} and $\nodparam^i$ defined in~\eqref{eq:neural_nod}.
    If $\exists \iagent\in\ordersetagent$, $\ell\in\ordersetoptioni$ such that $\alpha^\iagent_\ell + \sigma^\iagent_\ell(\bar{\jacobian}_0) > 0$, then the following results hold:
    \begin{itemize}
        \item When bias term $\bias = 0$, if $\attstate > \|\column(\{\damping_\iagent\}_{\iagent \in \ordersetagent})\|_{\infty}/$ $\max\real^+(\sigma(\jacobian_0))$, then the neutral opinion $\opnstate=0$ is an unstable equilibrium of the Neural \gls{NOD}.
        Moreover, the opinion state departs $\opnstate=0$ (locally) at an exponential rate with an arbitrarily small bias~$\bias$,
        \item When bias term $\bias \neq 0$, the neutral opinion $\opnstate=0$ is not an equilibrium of the Neural \gls{NOD} and the pitchfork unfolds, i.e., the model is ultrasensitive at $\opnstate=0$.
    \end{itemize}
\end{theorem}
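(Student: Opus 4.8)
The plan is to reduce the $\bias=0$ claim to a spectral condition on the linearization of $\opinionDyn_c$ at the origin and then invoke Lyapunov's indirect method, and to treat the $\bias\neq0$ claim by a bifurcation-unfolding argument. First I would verify that when $\bias=0$ the origin is genuinely an equilibrium: since $\saturation_1(0)=\saturation_2(0)=0$ we have $\saturation(0)=0$, hence $\opinionDyn_c(0)=-D\cdot 0+0+\attstate\,\saturation(0)=0$. Computing $\jacobian(\opinionDyn_c)|_{\opnstate=0}$, the $-D\opnstate$ term contributes $-D$, where $D:=\blkdiag(D^1,\dots,D^{\nagents})$; the constant bias drops out; and $\attstate\,\saturation(\opnstate)$ contributes $\attstate\jacobian_0$ with $\jacobian_0$ the very matrix appearing in the Lemma. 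Thus the linearization is $-D+\attstate\jacobian_0$, and everything reduces to locating an eigenvalue of this matrix in the open right half-plane.

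The core step uses the Lemma: the hypothesis $\alpha^\iagent_\ell+\sigma^\iagent_\ell(\bar{\jacobian}_0)>0$ guarantees $\real^+(\sigma(\jacobian_0))\neq\emptyset$, so $\mu^\star:=\max\real^+(\sigma(\jacobian_0))>0$ is attained by some eigenpair $(\mu^\star,w)$ with $\norm{w}=1$. In the uniform-damping case $D=\damping I$ the two matrices commute and the spectrum of $-D+\attstate\jacobian_0$ equals $\{-\damping+\attstate\mu : \mu\in\sigma(\jacobian_0)\}$, so the critical mode has real part $-\damping+\attstate\mu^\star$, which is positive precisely when $\attstate>\damping/\mu^\star=\norm{\column(\{\damping_\iagent\}_{\iagent\in\ordersetagent})}_\infty/\mu^\star$, matching the stated threshold. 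For non-uniform damping I would bound the Rayleigh quotient along $w$, namely $\real\big(w^*(-D+\attstate\jacobian_0)w\big)=-\,w^*Dw+\attstate\mu^\star\ge-\norm{\column(\{\damping_\iagent\})}_\infty+\attstate\mu^\star>0$, using $w^*Dw\le\max_{\iagent,\ell}\damping^\iagent_\ell=\norm{\column(\{\damping_\iagent\})}_\infty$, and then promote this positive quadratic form to a genuine right-half-plane eigenvalue by exploiting that the pitchfork's critical mode is real, so $w$ and $\mu^\star$ may be taken real and the critical eigendirection is preserved.

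I expect this promotion to be the main obstacle. Because $D$ and $\jacobian_0$ need not commute and $\jacobian_0$ is non-normal, a positive quadratic form (equivalently, a positive largest eigenvalue of the symmetric part) only \emph{upper} bounds the spectral abscissa and therefore cannot, by itself, certify a right-half-plane eigenvalue; a non-normal matrix can exhibit transient growth while remaining Hurwitz. I would close this gap either by reducing to the real critical eigendirection singled out by the Lemma and tracking that simple eigenvalue under the perturbation $-D$, or---when the coupling renders $\jacobian_0$ essentially nonnegative---by a Perron--Frobenius argument that makes the dominant eigenvector nonnegative and the damping act benignly on it; the clean uniform-damping computation already fixes the correct threshold. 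Once $\real^+(\sigma(-D+\attstate\jacobian_0))\neq\emptyset$ is established, Lyapunov's indirect method yields that $\opnstate=0$ is unstable with local departure at the exponential rate set by the dominant eigenvalue; moreover, since an arbitrarily small constant bias merely shifts the fixed point by $O(\norm{\bias})$ without removing the unstable mode, the exponential-departure-under-small-bias claim follows from the same linearization.

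For the $\bias\neq0$ bullet I would first note $\opinionDyn_c(0)=\bias\neq0$, so the origin is no longer an equilibrium. To show the pitchfork \emph{unfolds}, I would perform a center-manifold (Lyapunov--Schmidt) reduction onto the critical mode at the bifurcation point and use the nondegeneracy conditions $\saturation_r''(0)\neq0$ and $\saturation_r'''(0)\neq0$ to match the reduced one-dimensional dynamics to the normal form of a pitchfork in which $\bias$ enters as the symmetry-breaking unfolding parameter. A nonzero $\bias$ then destroys the trivial branch and yields the characteristic imperfect-bifurcation (ultrasensitive) response near $\opnstate=0$, adapting the \gls{NOD} bifurcation analysis of Bizyaeva et al.~\cite{bizyaeva2022nonlinear} and Hu et al.~\cite{hu2023emergent}.
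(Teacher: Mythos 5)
Your proposal follows essentially the same route as the paper's proof: linearize at $\opnstate=0$, invoke the Lemma to obtain $\real^+(\sigma(\jacobian_0))\neq\emptyset$, deduce exponential instability of the origin from the attention threshold, and dispose of the $\bias\neq 0$ bullet by unfolding/imperfect-bifurcation theory (the paper simply cites Golubitsky--Schaeffer where you sketch the Lyapunov--Schmidt reduction; your version is a fleshed-out form of the same idea). Your equilibrium check and your uniform-damping computation recover exactly the paper's threshold $\attstate^*=\|\column(\{\damping_\iagent\}_{\iagent\in\ordersetagent})\|_{\infty}/\max\real^+(\sigma(\jacobian_0))$; note also that your linearization $-D+\attstate\jacobian_0$ is the correct one, whereas the paper writes $-D+\jacobian_0$ with the attention factor left implicit.

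The step you flag as ``the main obstacle''---promoting the threshold inequality to an actual right-half-plane eigenvalue of $-D+\attstate\jacobian_0$ when $D$ is not a multiple of the identity---is a genuine gap, and it is precisely the step the paper's proof disposes of in a single unjustified assertion. In fact, no argument can close it at this level of generality, because the assertion is false. Take two agents with one option each, self-gains $\alpha^1=0$, $\alpha^2=5/4$, and couplings $\gamma^{12}=1$, $\gamma^{21}=-1/4$ (the paper constrains $\alpha,\beta$ to be nonnegative but places no sign restriction on $\gamma$), so that
\begin{equation*}
\jacobian_0=\begin{pmatrix} 0 & 1\\ -1/4 & 5/4\end{pmatrix},\qquad
\bar{\jacobian}_0=\begin{pmatrix} 0 & 1\\ -1/4 & 0\end{pmatrix}.
\end{equation*}
Then $\jacobian_0$ has eigenvalues $1$ and $1/4$, so the Lemma's conclusion holds, and the theorem's hypothesis is satisfied since the eigenvalues of $\bar{\jacobian}_0$ are $\pm i/2$ and $\alpha^2+\real\bigl(\sigma^2(\bar{\jacobian}_0)\bigr)=5/4>0$. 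Choosing $D=\diag(1/2,\,9/10)$ and $\attstate=1>9/10=\|\column(\{\damping_\iagent\})\|_\infty/\max\real^+(\sigma(\jacobian_0))$, one computes
\begin{equation*}
-D+\attstate\jacobian_0=\begin{pmatrix} -1/2 & 1\\ -1/4 & 7/20\end{pmatrix},
\end{equation*}
which has trace $-3/20<0$ and determinant $3/40>0$, hence is Hurwitz: the neutral opinion is exponentially \emph{stable}, contradicting the claimed instability. This confirms your suspicion that the Rayleigh-quotient bound (which controls only the numerical abscissa) cannot be ``promoted,'' and it shows the deficiency lies in the statement's generality, not in your approach: the two rescues you name are exactly the regimes where the claim is true---uniform damping $D=\damping I$ (your commuting computation), or couplings of consistent sign so that $\jacobian_0$ is Metzler up to a diagonal sign similarity, in which case Perron--Frobenius monotonicity of the spectral abscissa gives $\max\real\bigl(\sigma(-D+\attstate\jacobian_0)\bigr)\geq \attstate\,\mu^\star-\max_{\iagent,\opnidx}\damping^\iagent_\opnidx>0$. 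In short, your attempt is rigorous exactly where the paper's proof is, and honest about the one step the paper glosses over, which as stated does not hold without additional structural assumptions.
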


\begin{proof}
    Linearizing Neural \gls{NOD} at neutral opinion $\opnstate = 0$ gives $\dot{\opnstate} = (-D + \jacobian_0) \opnstate + \bias$, where $D = \blkdiag(\{D^i\}_{i \in \ordersetagent})$.
    When $\bias = 0$, indecision $\opnstate=0$ is an equilibrium.
    Since $\real^+(\sigma(\jacobian_0)) \neq \emptyset$, if the attention variable $\attstate$ is chosen such that $\attstate > \attstate^* = \|\column(\{\damping_\iagent\}_{\iagent \in \ordersetagent})\|_{\infty} / \max\real^+(\sigma(\jacobian_0))$, then matrix $-D + \jacobian_0$ has one eigenvalue with positive real part, i.e., indecision $\opnstate=0$ is exponentially unstable.
    Thus, an indecision-breaking pitchfork bifurcation occurs at critical attention
    value $\attstate^*$.
    When $\bias \neq 0$, the pitchfork bifurcation unfolds, as predicted by the unfolding theory~\cite{golubitsky1985singularities}.
\end{proof}

\section{Simulation Results}
\label{sec:results}

We use our proposed Neural \gls{NOD} model, trained on both synthetic and human datasets, to automatically tune the game cost weights within the ILQGame planning framework~\cite{fridovich2020efficient}.
We evaluate its performance in simulated autonomous racing scenarios (our running example).
For both planning and simulation, we employ the 4D kinematic bicycle model~\cite{zhang2020optimization}, discretized with a time step of $\Delta t = 0.1$ s, to describe the vehicle motion.
We implement the inverse game training pipeline with Flax~\cite{flax2020github} and train all neural networks with Adam~\cite{kingma2014adam}.
All planners are implemented using JAX~\cite{jax2018github} and run in real-time at a frequency of 10Hz on a desktop with an AMD Ryzen 9 7950X CPU.

\p{Hypotheses}
We make three hypotheses that showcase the strengths of Neural \gls{NOD}.
\begin{itemize}
    \item \textbf{H1 (Performance).} \emph{The Neural \gls{NOD} model leads to safer and more competitive robot motion.}
    
    \item \textbf{H2 (Generalization).} \emph{The Neural \gls{NOD} model generalizes better to out-of-distribution rival behaviors.}
    
    \item \textbf{H3 (Human data compatibility).} \emph{The inverse game training pipeline can effectively learn a Neural \gls{NOD} model from noisy human data.}
\end{itemize}

\p{Baselines}
We compare our approach with two baseline methods.
All neural networks have five fully-connected layers with 256 neurons each. The network features include all vehicles' physical states.
\begin{itemize}
    \item \textbf{Multi-layer perceptron learned from an inverse game (MLP-IG).} A mapping from feature $\state_t$ to cost weights $\theta_t$, parameterized by a \gls{MLP} model and learned with an inverse dynamic game.
    This approach is briefly explored in the prior state-of-the-art inverse game approach~\cite{liu2023learning}, which learns a \gls{DNN} that predicts game objectives.
    \item \textbf{\gls{E2E-BC}.} An end-to-end control policy (i.e., one that takes as input state $\state_t$ and returns agents' control $\ctrl_t$) learned with behavior cloning and supervised learning~\cite{bojarski2016end}.
\end{itemize}

\p{Metrics} We consider the following performance metrics:
\begin{itemize}
    \item \textbf{Safe rate (SR).} A ratio defined as $N_{\text{safe}} / N_{\text{trial}} \times 100\%$, where $N_{\text{safe}}$ is the number of safe trials---those in which the ego car stays within the track limits and avoids collisions with its rival at all times---and $N_{\text{trial}}$ is the total number of trials.
    \item \textbf{Overtaking rate (OR).} A ratio defined as $N_{\text{overtake}} /$ $N_{\text{trial}} \times 100\%$, where $N_{\text{overtake}}$ is the number of safe trials in which the ego car successfully overtakes the rival \emph{and} maintains a lead over it at the end of the trial.
    \item \textbf{Average end-time leading distance (AELD).} The distance between the ego car and its rival at the end of a trial, measured in meters.
\end{itemize}
We will report SR and OR in percentage, and AELD with mean and standard deviation calculated across all trials.

\subsection{Synthetic dataset}
In the first experiment, we learn all policies from a synthetic dataset with racing demonstrations obtained with a model-based trajectory optimizer.
We first evaluated each policy with in-distribution rival behaviors, where the rival used an ILQGame policy whose blocking cost weight was randomized within the same range ($\theta^r_{\rm{bl}} \in [0,30]$) as the training data. 
We simulated 100 races with randomized initial states and rival cost weights, and the statistics are shown in \autoref{tab:synth_id_results}.
The proposed Neural \gls{NOD} outperformed both baselines in all metrics.

\begin{figure}[!hbtp]
  \centering
  \includegraphics[width=0.92\columnwidth]{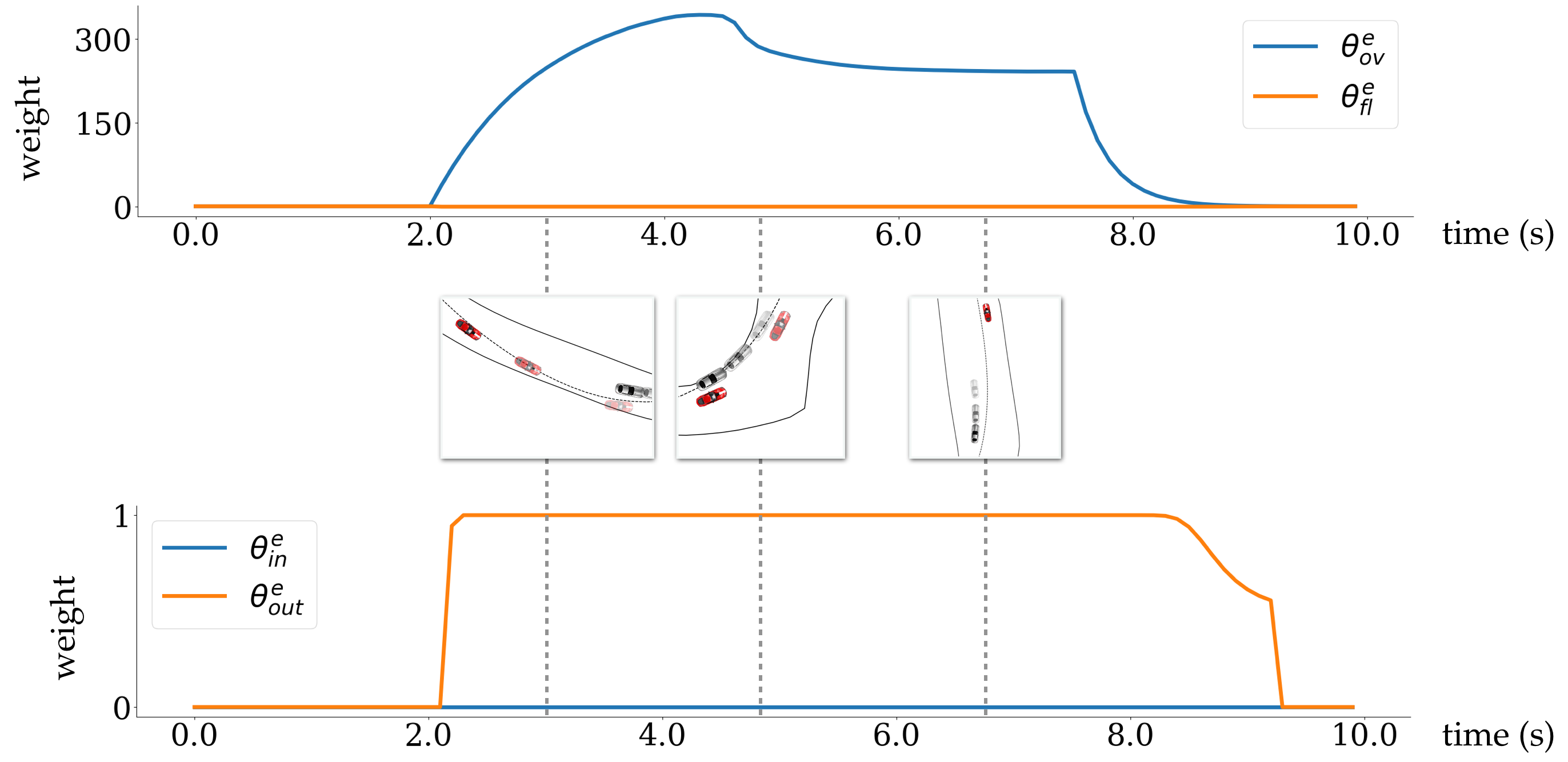}
  \caption{\label{fig:traj_nod} Simulation snapshots and the time evolution of game cost weights, when the ego vehicle uses the Neural \gls{NOD} learned from the synthetic dataset. Planned future motions are displayed with transparency. The racing line is plotted in dashed black.
  The ego car made a timely decision to speed up and move to the outside, safely overtaking the rival.
  }
  \vspace{-7mm}
\end{figure}

We then simulated another 100 randomized races against a more aggressive rival, whose blocking cost weight was randomized within the range $\theta^r_{\rm{bl}} \in [60,80]$. 
As shown in \autoref{tab:synth_ood_results}, in this more challenging, out-of-distribution setting, Neural \gls{NOD} led the baselines by a significant margin in terms of safe rate, overtaking rate, and AELD.
The \gls{E2E-BC} baseline yielded a significantly lower safe rate, confirming the well-known generalization issue of behavior cloned policies when deployed in previously unseen scenarios.

Finally, in 100 additional randomized races where the rival used the \gls{E2E-BC} policy,
the Neural \gls{NOD} continued to outperform the baselines in all metrics.
Those results jointly validated H1.
Results in \autoref{tab:synth_ood_results} and \autoref{tab:synth_bc_results} validated H2.

\begin{remark}[Information Privilege]
    Although rival's cost weights were not accessible to the ego,
    the solvers still shared a set of common knowledge (e.g., the basis cost functions), which can constitute privileged information of the game-based policies over \gls{E2E-BC}.
    In \autoref{tab:synth_bc_results}, we provide results with the rival using the \gls{E2E-BC} policy, in which case the game-based policies no longer have an information privilege.
\end{remark}

\setlength\tabcolsep{10pt}
\begin{table}[!hbtp]
\centering
\resizebox{\columnwidth}{!}{
\begin{tabular}{l|ccc}
\toprule
Method & 
SR [$\%$] $\uparrow$ &  
OR [$\%$] $\uparrow$ &  
AELD [m] $\uparrow$
\\ \midrule
\gls{MLP}-IG         &$\mathbf{95\%}$   &$93.68\%$   &$42.87 \pm 24.30$          \\
\gls{E2E-BC}         &$90\%$   &$88.89\%$   &$39.39 \pm 26.68$          \\
\emph{Neural \gls{NOD} (ours)}  &$\mathbf{95\%}$    &$\mathbf{96.84\%}$  &$\mathbf{50.85 \pm 29.91}$   \\ \bottomrule
\end{tabular}
}
\vspace{0.1em}
\caption{Results obtained from 100 randomized races with in-distribution rival behaviors and policies trained with synthetic data.}
\label{tab:synth_id_results}
\vspace{-6mm}
\end{table}

\setlength\tabcolsep{10pt}
\begin{table}[!hbtp]
\centering
\resizebox{\columnwidth}{!}{
\begin{tabular}{l|cccc}
\toprule
Method & 
SR [$\%$] $\uparrow$ &  
OR [$\%$] $\uparrow$ &  
AELD [m] $\uparrow$
\\ \midrule
\gls{MLP}-IG         &$82\%$   &$62.20\%$   &$4.50 \pm 37.84$         \\
\gls{E2E-BC}         &$66\%$   &$60.61\%$   &$3.64 \pm 33.71$         \\
\emph{Neural \gls{NOD} (ours)}  &$\mathbf{91\%}$    &$\mathbf{76.92\%}$  &$\mathbf{7.60 \pm 42.25}$  \\ \bottomrule
\end{tabular}
}
\vspace{0.1em}
\caption{Results obtained from 100 randomized races in an out-of-distribution evaluation with more aggressive rival behaviors.}
\vspace{-6mm}
\label{tab:synth_ood_results}
\end{table}

\setlength\tabcolsep{10pt}
\begin{table}[!hbtp]
\centering
\resizebox{\columnwidth}{!}{
\begin{tabular}{l|cccc}
\toprule
Method & 
SR [$\%$] $\uparrow$ &  
OR [$\%$] $\uparrow$ &  
AELD [m] $\uparrow$ 
\\ \midrule
\gls{MLP}-IG         &$70\%$   &$61.43\%$   &$11.77 \pm 35.00$          \\
\gls{E2E-BC}         &$55\%$   &$61.82\%$   &$14.32 \pm 35.07$          \\
\emph{Neural \gls{NOD} (ours)}  &$\mathbf{87\%}$    &$\mathbf{74.71\%}$  &$\mathbf{16.34 \pm 52.49}$   \\ \bottomrule
\end{tabular}
}
\vspace{0.1em}
\caption{Results obtained from 100 randomized races with the rival using the behavior-cloned policy trained with synthetic data.}
\label{tab:synth_bc_results}
\vspace{-6mm}
\end{table}

\setlength\tabcolsep{10pt}
\begin{table}[!hbtp]
\centering
\resizebox{\columnwidth}{!}{
\begin{tabular}{l|cccc}
\toprule
Method & 
SR [$\%$] $\uparrow$ &  
OR [$\%$] $\uparrow$ &  
AELD [m] $\uparrow$ 
\\ \midrule
\gls{MLP}-IG                    &$78\%$             &$53.85\%$   &$14.49 \pm 42.95$          \\
\gls{E2E-BC}                    &$62\%$             &$75.81\%$   &$12.35 \pm 34.77$          \\
\emph{Neural \gls{NOD} (ours)}  &$\mathbf{81\%}$    &$\mathbf{82.72\%}$  &$\mathbf{15.94 \pm 34.55}$   \\ \bottomrule
\end{tabular}
}
\vspace{0.1em}
\caption{Results obtained from 100 randomized races with ego policy trained on the human-generated dataset. The rival uses a game policy with randomized cost weights.}
\label{tab:human_data_results}
\end{table}

\begin{figure}[!hbtp]
  \centering
  \includegraphics[width=1.0\columnwidth]{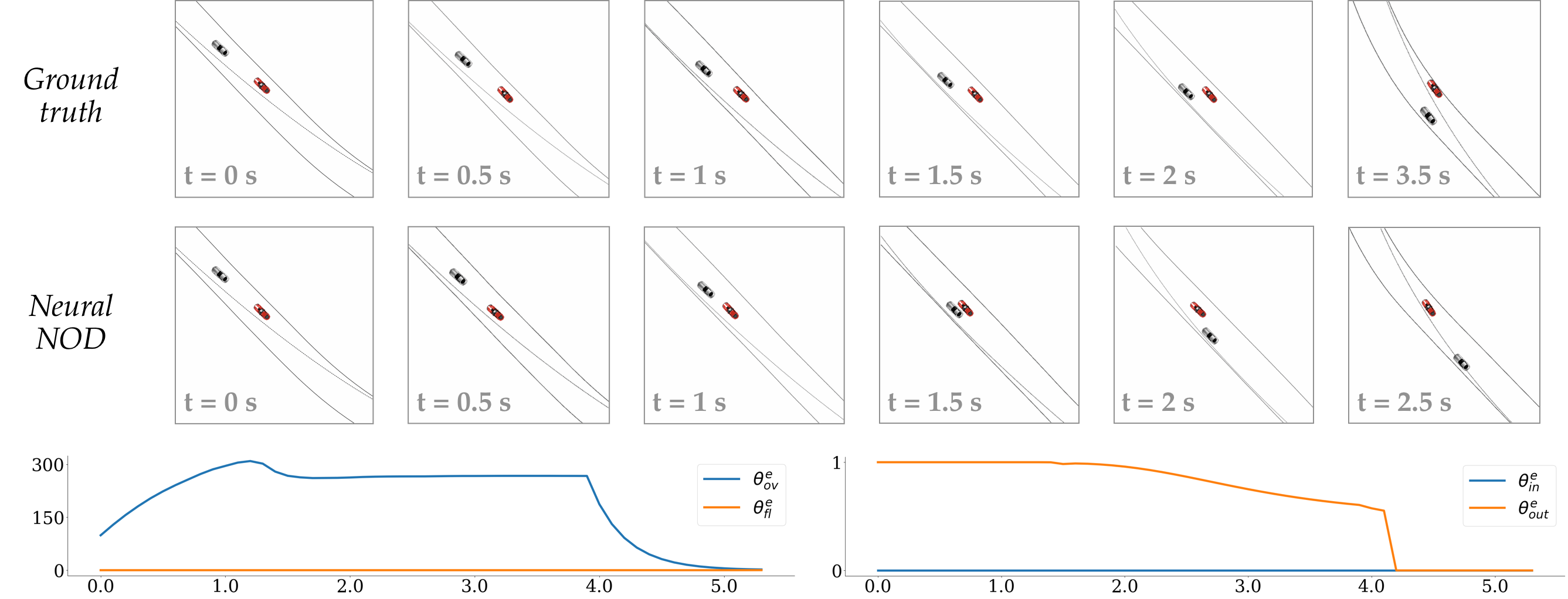}
  \caption{\label{fig:human_gt} Comparing a simulated gameplay trajectory against the groundtruth.
  \emph{Top:} Groundtruth trajectories of the ego and rival.
  \emph{Middle:} Simulation snapshots when the ego vehicle uses the Neural \gls{NOD} to race against a rival, whose motion is replayed from the groundtruth data.
  \emph{Bottom:} Time evolution of game cost weights tuned by the Neural \gls{NOD}.
  }
  \vspace{-6mm}
\end{figure}

In \autoref{fig:traj_nod}, we examine one representative simulated race with the ego using the Neural \gls{NOD}.
The vehicles started from the same initial conditions, and the rival used a game policy whose cost parameters were not accessible to the ego's policy.
The ego agent rapidly formed a strong opinion in favor of overtaking the rival from the outside.
This resulted in a smooth, decisive, and safe overtaking maneuver.

\subsection{Human-generated dataset}

Next, we learn all policies from a dataset with racing demonstrations performed by human drivers.
To obtain such racing data, we leveraged a driving simulator (shown in the upper right corner of~\autoref{fig:comp_graph}), where the ego car was driven by human participants of various skill levels against a rival using a reactive \gls{RL}-based policy in CARLA~\cite{dosovitskiy2017carla}.
For evaluation, we simulated 100 races with randomized initial states and a rival using a game policy with randomized cost weights inaccessible to the ego's planner.
The racing statistics are shown in \autoref{tab:human_data_results}.
The Neural \gls{NOD} again outperformed both baselines in all metrics.
This result coincides with those obtained based on the synthetic dataset, confirming the ability of our proposed inverse game framework to learn a Neural \gls{NOD} from noisy human data (H3).
In \autoref{fig:human_gt}, we compare a simulation trajectory generated by Neural \gls{NOD} with the groundtruth replay.
Under this initial condition, when the rival tried to defend from the inside (left), the Neural \gls{NOD} decided to attack from the outside (right), controlling the ego car to successfully overtake the rival.
Note that this decision closely resembles the groundtruth human demonstration, also displayed in \autoref{fig:human_gt}.

\begin{figure}[!hbtp]
  \centering
  \includegraphics[width=1.0\columnwidth]{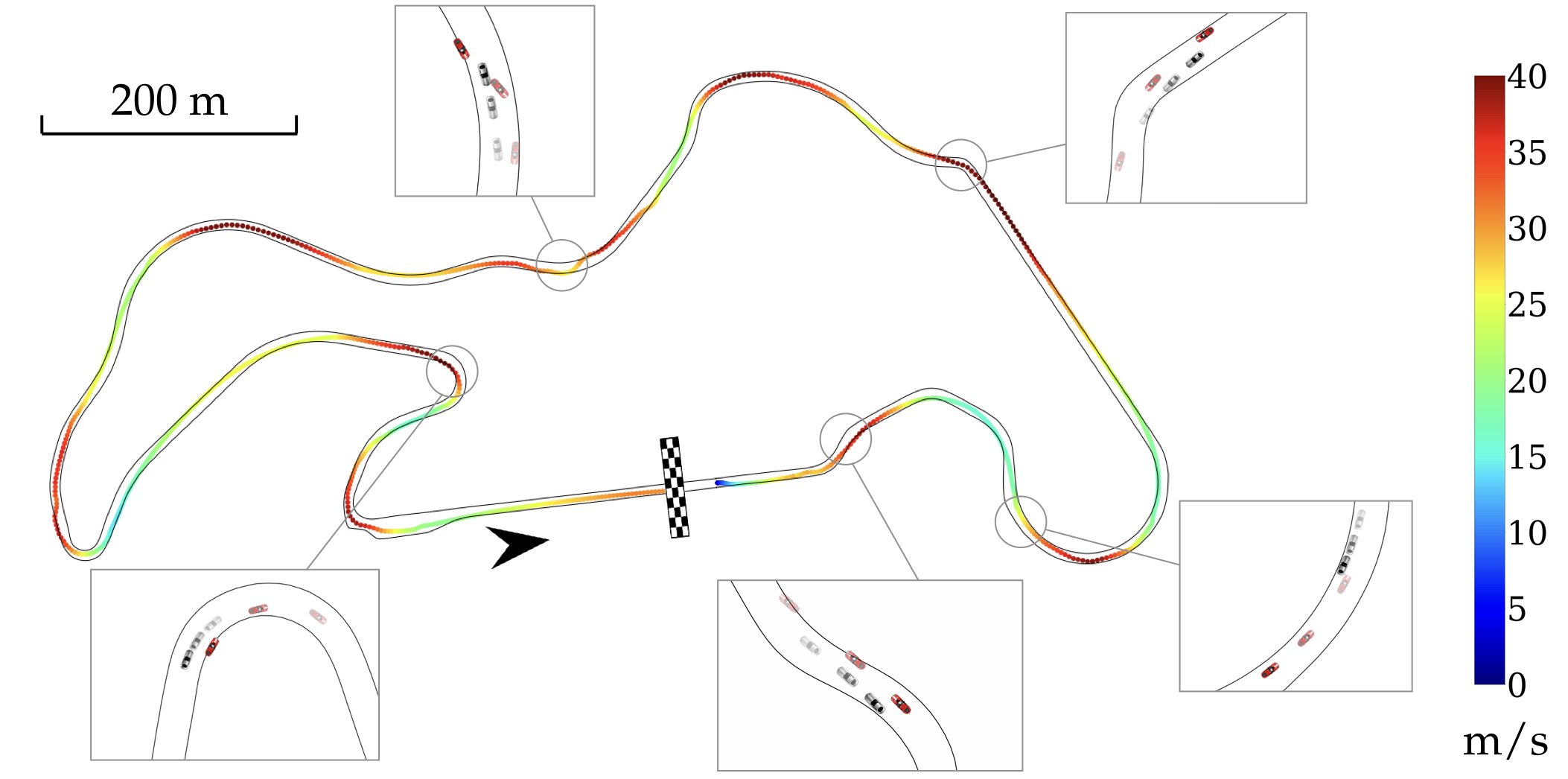}
  \caption{\label{fig:full_race} Ego trajectory and velocity profile of the full endurance race at the Thunderhill Raceway. Transparent footprints denote the planned motion of the ego (red) and rival (silver).
  The black arrow and the grid indicate the track direction and finish line, respectively.
  \vspace{-8mm}
  }
\end{figure}

\subsection{The full endurance race}
In the last example, we test Neural \gls{NOD} in an \emph{endurance race} inspired by the popular video game Real Racing 3.
In this race, the ego car is required to complete one full lap with dual objectives: overtaking \textit{as many rivals} as possible while minimizing the lap time.
Each time a rival player is overtaken, it will be eliminated from the race, and a new rival will be initialized in front of the ego car. 
We use \gls{DNN} $\nnnodinit$ (c.f.~\eqref{eq:neural_nod:z0}) to reset the opinion states when the rival is respawned.
Using the Neural \gls{NOD}, the ego completed the race in $101.4$ s and overtook $9$ rivals without collision.
With the \gls{MLP} baseline, the ego finished the race in $114.2$ s, overtaking $7$ rivals but incurred $2$ crashes.
Finally, under the behavior cloning baseline, the race was concluded in $110.1$ s, during which the ego overtook $9$ rivals but with $3$ crashes.
This result validated H1.

\section{Limitations and Future Work}
\label{sec:future_work}

In this paper, we consider a restricted setting where Neural \gls{NOD} is used to automatically tune the (continuous) game cost weights.
As shown in recent work~\cite{hu2023emergent}, an opinion state can be interpreted as a simplex.
Therefore, a Neural \gls{NOD} model is also capable of coordinating agents over \textit{integer-valued} options such as leadership assignment in a Stackelberg setting~\cite{khan2023leadership,hu2024plays}.
In addition, while attention $\lambda$ governed by the black-box \gls{DNN} generally performs well for car racing, 
we may use an explicit attention model to enable \textit{excitable} decision-making by, for example, adding an extra slower negative feedback loop~\cite[Sec.~7.2.2]{leonard2024fast}.
Such enhanced flexibility would allow agents to \textit{forget} prior decisions to prevent sticking to decisions that may no longer be safe and/or competitive as the environment evolves rapidly.
Finally, we see an open opportunity to use Neural \gls{NOD} for \emph{shared autonomy}, e.g., AI-assisted car racing~\cite{TANIGUCHI2014, decastrodreaming}, in which the human and robot \emph{simultaneously} provide control inputs to the system while interacting with other agents. 
By modeling the agents' intents as opinion states and planning robot motion in the joint intent--physical space~\cite{hu2024doxo}, we can achieve not only \textit{value alignment}, i.e., the robot infers and adopts the human's goals, but also \textit{automation transparency}, i.e., the human is aware of the robot’s current intent, both rapidly and decisively with Neural \gls{NOD}.

\section{Conclusion}
\label{sec:conclusion}
In this paper, we introduced a Neural \gls{NOD} model for game-theoretic robot motion planning in a split-second, and an inverse game approach to learn such a model from data. 
We also provided a constructive procedure to adjust Neural \gls{NOD} parameters online such that breaking of indecision is guaranteed.
Through extensive simulation studies of autonomous racing based on real-world circuit and human-generated interaction data, we demonstrated that a dynamic game policy guided by a Neural \gls{NOD} can consistently outperform state-of-the-art imitation learning and data-driven inverse game policies.

\balance
\printbibliography{}

\end{document}